\newtheorem{theorem}{Theorem}
\newtheorem{corollary}{Corollary}
\newtheorem{proposition}{Proposition}
\newtheorem{remark}{Remark}
\title{Scalable Kernel Inverse Optimization}
\author{%
  Youyuan Long \\
  Delft Center for Systems and Control\\
  Delft University of Technology\\
  The Netherlands \\
  \texttt{longyouyuan432@gmail.com} \\
  \And
  Tolga Ok \\
  Delft Center for Systems and Control\\
  Delft University of Technology\\
  The Netherlands \\
  \texttt{T.Ok@tudelft.nl} \\
  \And
  Pedro Zattoni Scroccaro \\
  Delft Center for Systems and Control\\
  Delft University of Technology\\
  The Netherlands \\
  \texttt{P.ZattoniScroccaro@tudelft.nl} \\
  \And
  Peyman Mohajerin Esfahani \\
  Delft Center for Systems and Control\\
  Delft University of Technology\\
  The Netherlands \\
  \texttt{P.MohajerinEsfahani@tudelft.nl} \\
}
\begin{document}

\maketitle

\begin{abstract}

Inverse Optimization (IO) is a framework for learning the unknown objective function of an expert decision-maker from a past dataset.
In this paper, we extend the hypothesis class of IO objective functions to a reproducing kernel Hilbert space (RKHS), thereby enhancing feature representation to an infinite-dimensional space.
We demonstrate that a variant of the representer theorem holds for a specific training loss, allowing the reformulation of the problem as a finite-dimensional convex optimization program.
To address scalability issues commonly associated with kernel methods, we propose the Sequential Selection Optimization (SSO) algorithm to efficiently train the proposed Kernel Inverse Optimization (KIO) model.
Finally, we validate the generalization capabilities of the proposed KIO model and the effectiveness of the SSO algorithm through learning-from-demonstration tasks on the MuJoCo benchmark.
 
\end{abstract}

\section{Introduction}
Inverse Optimization (IO) is distinct from traditional optimization problems, where we typically seek the optimal decision variables by optimizing an objective function over a set of constraints.
In contrast, inverse optimization works ``in reverse'' by inferring the optimization objective given the optimal solution.
The inherent assumption in IO is that an agent generates its decision by solving an optimization problem.
The assumed optimization problem is called the Forward Optimization Problem (FOP), which is parametric in the exogenous signal $\hat{s}$ with the corresponding optimal solution $\hat{u}$.
Therefore, IO aims to deduce the objective function of the FOP from a dataset of exogenous signal and decision pairs, $\{(\hat{s}_i,\hat{u}_i)\}_{i=1}^N$.
In this work, we assume the constraints are known a priori.
Consequently, we can leverage the FOP derived from the expert’s dataset by solving it to mimic the expert's behavior when encountering new exogenous signals.
IO has garnered widespread attention within several fields, giving rise to numerous studies encompassing both theoretical and applied research.
Application domains include vehicle routing \citep{chen2021inverse, ronnqvist2017calibrated, zattoniscroccaro2024inverse}, transportation system modeling \citep{patriksson2015traffic, bertsimas2015data}, portfolio optimization \citep{mohajerin2018data, yu2023learning, li2021inverse}, power systems \citep{birge2017inverse, fernandez2021forecasting, saez2016data}, electric vehicle charging problems \citep{fernandez2021inverse}, network design \citep{farago2003inverse}, healthcare problems \citep{ayer2015inverse}, as well as controller design \citep{akhtar2021learning}.
For a more detailed discussion on different applications of IO, we refer the readers to the recent survey paper \citep{chan2023inverse}.

IO can be categorized into classic IO and data-driven IO.
In classic IO, only a single signal-decision pair is considered, where the decision is assumed to be the optimal solution of the FOP (i.e., there is no noise), and different classes of FOPs have been studied, such as linear conic problems \cite{ahuja2001inverse, shahmoradi2022quantile, iyengar2005inverse}.
However, in real-world applications, there are usually many observations of signal-decision pairs, and due to the presence of noise, it is usually unreasonable to assume that all observed decisions are optimal w.r.t a single, true data-generating FOP.
Additionally, for complex tasks, the chosen FOP may only approximate the task, not allowing for perfect replication of the observed behavior from the expert.
These cases are referred to as data-driven IO problems.
In such scenarios, a loss function is usually used to compute the discrepancy between observed data and the decision generated by the learned FOP.
Examples of loss functions include the \textit{2-norm distance loss} \citep{aswani2018inverse}, \textit{suboptimality loss} \citep{mohajerin2018data}, \textit{variational inequality loss} \citep{bertsimas2015data}, \textit{KKT loss} \citep{keshavarz2011imputing}, and \textit{augmented suboptimality loss} \citep{zattoniscroccaro2024learning}.

In data-driven IO, the objective function of the FOP is typically non-linear with respect to an exogenous signal $\hat{s}$.
Hence, classical methods that learn an FOP based on linear function classes may oversimplify the problem and lead to suboptimal solutions.
One effective approach for addressing the expressibility issue in data-driven IO problems is the introduction of kernel methods.
These methods have been extensively studied within the context of IO \cite{shafieezadeh2019regularization, bertsimas2015data} and have shown promising results for scaling IO to address practical problems.
The application of kernel methods in IO allows for the exploration of a broader class of optimization problems, thereby enhancing the model’s ability to generalize from observed decisions to unseen situations.
Specifically, using a kernelized approach facilitates the embedding of decision data into a richer feature space, enabling the deduction of an FOP that not only fits the training data but also exhibits strong generalization capabilities.

\textbf{Contributions}.
We list the contributions of this work as follows:

\begin{enumerate}[label={\textbf{(\arabic*)}}]
    \item \textbf{Kernelized IO Formulation}: We propose a novel Kernel Inverse Optimization (KIO) model based on suboptimality loss \cite{mohajerin2018data}.
    The proposed approach leverages kernel methods to enable IO models to operate on infinite-dimensional feature spaces, which allows KIO to outperform existing imitation learning (IL) algorithms on complex continuous control tasks in low-data regimes.
    
    \item \textbf{Sequential Selection Optimization Algorithm}: To address the quadratic computational complexity of the proposed KIO model, we introduce the Sequential Selection Optimization (SSO) algorithm inspired by coordinate descent style updates.
    This algorithm selectively optimizes components of the decision variable, greatly enhancing efficiency and scalability while provably converging to the same solution of our proposed KIO model.

    \item \textbf{Open Source Code}: To foster reproducibility and further research, we provide an open-source implementation of the proposed KIO model and the SSO algorithm, along with the source code of the experiments in Github\footnote{\url{https://github.com/Longyouyuan/Scalable-Kernel-Inverse-Optimization}}. 
\end{enumerate}

\paragraph{Notation} $\mathbb{R}_{+}^n$ denotes the space of $n$-dimensional non-negative vectors.
The identity square matrix with dimension $n$ is denoted by $I_n$.
For a symmetric matrix $Q$, the inequality $Q \succeq 0$ (respectively, $Q \succ 0$) means that $Q$ is positive semi-definite (respectively, positive definite).
The trace of a matrix $Q$ is denoted as $\text{Tr}(Q)$.
Given a vector $x \in \mathbb{R}^n$, we use the shorthand notation $\|x\|_Q^2:=x^{\top} Q x$.
Symmetric block matrices are described by the upper diagonal elements while the lower diagonal elements are replaced by ``*''.
The Frobenius norm of matrix $Q$ is denoted as $\| Q \|_F$.
The notation \(Q_{ij}\) represents the element in the \(i\)-th row and \(j\)-th column of the matrix \(Q\).
The Euclidean inner product of $x$ and $y$ is denoted as $x^\top y$.

\section{Preliminaries}

\subsection{Inverse Optimization}
In general, to solve a data-driven IO problem, we need to design two components: the Forward Optimization Problem (FOP) and the loss function.
Specifically, the FOP corresponds to the optimization problem we aim to ``fit'' to the observed dataset $ \mathcal{\hat{D}} = \{(\hat{s}_i,\hat{u}_i)\}_{i=1}^N$, where each input-output pair $(\hat{s}_i, \hat{u}_i) \in \mathcal{S} \times \mathbb{R}^n$.
In this paper, we use the ``hat'' notation (e.g., $\hat{s}$) to denote objects that depend on the dataset.
Our goal is to find a parameter vector $\theta \in \Theta$ such that
\begin{equation}
    \label{eq:FOP}
    \text{FOP}(\theta, \hat{s}) := \min_{u \in \mathbb{U}(\hat{s})} F_\theta(\hat{s}, u),
\end{equation}
replicates the data as closely as possible by minimizing a loss function, akin to classical empirical risk minimization problems.
In this work, we focus on lifting the learning problem based on quadratic FOPs. These FOPs include linear constraints and continuous decision variables, as defined by
\begin{equation}
    \label{eq:hypothesis}
    F_\theta(\hat{s}, u) := u^\top \theta_{uu} u + 2 \phi(\hat{s})^\top \theta_{su} u \quad \text{and} \quad \mathbb{U}(\hat{s}) := \left\{u \in \mathbb{R}^n : M(\hat{s}) u \leq W(\hat{s}) \right\},
\end{equation}
where $\theta := (\theta_{uu}, \theta_{su})$, $\theta_{uu} \in \mathbb{R}^{n \times n}$, $\theta_{su} \in \mathbb{R}^{m \times n}$, $M(s) \in \mathbb{R}^{m \times n}$, $W(s) \in \mathbb{R}^m$, and $\phi : \mathcal{S} \to \mathbb{R}^m$ is the feature function that maps $\hat{s}$ to a higher-dimensional feature space to enhance the model's capacity.
To simplify notation, we omit the explicit dependence of $M$ and $W$ on $s$, denoting them as $\hat{M}$ and $\hat{W}$, respectively.

To learn $\theta$, we solve a regularized loss minimization problem using the Suboptimality Loss \cite{mohajerin2018data}
\begin{equation}
    \label{eq:loss_minimization}
    \min_{\theta \in \Theta} \ k \mathcal{R}(\theta) + \frac{1}{N} \sum_{i = 1}^N \max_{u_i \in \mathbb{U}(\hat{s}_i)} \big\{ F_\theta(\hat{s}_i, \hat{u}_i) - F_\theta(\hat{s}_i, u_i) \big\},
\end{equation}
where $\Theta := \left\{\theta = (\theta_{uu}, \theta_{su}) : \theta_{uu} \succeq I_n \right\}$, $\mathcal{R}(\theta) := \|\theta_{uu}\|^2_F + \|\theta_{su}\|^2_F$, and $k$ is a positive regularization parameter.
The constraint $\theta_{uu} \succeq I_n$ prevents the trivial solution $\theta_{uu} = \theta_{su} = 0$ and guarantees that the resulting FOP is a convex optimization problem.
Moreover, since $F_\theta$ is linear in $\theta$, the optimization program \eqref{eq:loss_minimization} is convex w.r.t. $\theta$, and it can be reformulated from the ``minimax'' form to a single minimization problem.
This reformulation is based on dualizing the inner maximization problems of \eqref{eq:loss_minimization} and combining the resulting minimization problems.
\begin{proposition}[LMI reformulation \citep{akhtar2021learning}]
    \label{prop:LMI}
    For the hypothesis function and feasible set in \eqref{eq:hypothesis}, the optimization program \eqref{eq:loss_minimization} is equivalent to 
    \begin{equation}
        \label{eq:LMI}
        \begin{aligned} 
            \min_{\theta,\lambda_i,\gamma_i} \quad & k\mathcal{R}(\theta) +\frac{1}{N} \sum_{i=1}^N \left( F_\theta(\hat{s}_i,\hat{u}_i) + \frac{1}{4} \gamma_i + \hat{W}_i^\top \lambda_i \right) \\
            \emph{s.t.} \quad \ \ & \theta = (\theta_{uu}, \theta_{su}), \ \theta_{uu} \succeq I_n, \ \lambda_i \in \mathbb{R}_+^{d}, \ \gamma_i \in \mathbb{R} & \forall i \leq N  \\
            & \begin{bmatrix}
            \theta_{uu} & \hat{M}_i^\top \lambda_i + 2\theta_{su}^\top \phi(\hat{s}_i) \\
            * & \gamma_i
            \end{bmatrix} \succeq 0 & \forall i \leq N.
        \end{aligned}
    \end{equation}
\end{proposition}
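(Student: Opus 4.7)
The plan is to eliminate the inner maximization in \eqref{eq:loss_minimization} by convex duality and then linearize the resulting quadratic term with a Schur-complement argument. I would proceed in three steps.

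First, I would rewrite the inner problem. Since $F_\theta$ is quadratic, for each $i$ the maximization $\max_{u_i \in \mathbb{U}(\hat{s}_i)} \{ F_\theta(\hat{s}_i,\hat{u}_i) - F_\theta(\hat{s}_i,u_i)\}$ is equivalent, modulo the data-dependent constant $F_\theta(\hat{s}_i,\hat{u}_i)$, to the negative of the convex quadratic program
\begin{equation*}
    \min_{u_i} \; u_i^\top \theta_{uu} u_i + 2 \phi(\hat{s}_i)^\top \theta_{su} u_i \qquad \text{s.t.} \quad \hat{M}_i u_i \le \hat{W}_i .
\end{equation*}
Because $\theta_{uu} \succeq I_n \succ 0$, this QP is strictly convex and (assuming the data is feasible so $\mathbb{U}(\hat{s}_i)\neq\emptyset$) strong duality holds.

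Second, I would dualize this QP. Forming the Lagrangian with multipliers $\lambda_i \in \mathbb{R}_+^d$, the first-order optimality condition gives the closed-form inner minimizer $u_i^\star = -\tfrac{1}{2}\theta_{uu}^{-1}\bigl(\hat{M}_i^\top\lambda_i + 2\theta_{su}^\top\phi(\hat{s}_i)\bigr)$. Substituting back produces the dual objective
\begin{equation*}
    g_i(\lambda_i) \;=\; -\tfrac{1}{4}\bigl(\hat{M}_i^\top\lambda_i + 2\theta_{su}^\top\phi(\hat{s}_i)\bigr)^\top \theta_{uu}^{-1}\bigl(\hat{M}_i^\top\lambda_i + 2\theta_{su}^\top\phi(\hat{s}_i)\bigr) \;-\; \hat{W}_i^\top \lambda_i .
\end{equation*}
Using strong duality to swap the inner maximization for $\max_{\lambda_i\ge 0} g_i(\lambda_i)$ (and flipping signs), the overall program becomes a joint minimization over $\theta$ and $\lambda_i\ge 0$ of
\begin{equation*}
    k\mathcal{R}(\theta) + \tfrac{1}{N}\sum_{i=1}^N \Bigl[ F_\theta(\hat{s}_i,\hat{u}_i) + \tfrac{1}{4}\bigl(\hat{M}_i^\top\lambda_i + 2\theta_{su}^\top\phi(\hat{s}_i)\bigr)^\top \theta_{uu}^{-1}\bigl(\hat{M}_i^\top\lambda_i + 2\theta_{su}^\top\phi(\hat{s}_i)\bigr) + \hat{W}_i^\top\lambda_i \Bigr] .
\end{equation*}

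Third, I would linearize the remaining quadratic-in-inverse term. Introducing an epigraph variable $\gamma_i \in \mathbb{R}$ with the constraint $\gamma_i \ge \bigl(\hat{M}_i^\top\lambda_i + 2\theta_{su}^\top\phi(\hat{s}_i)\bigr)^\top \theta_{uu}^{-1}\bigl(\hat{M}_i^\top\lambda_i + 2\theta_{su}^\top\phi(\hat{s}_i)\bigr)$ is equivalent, via the Schur complement (valid since $\theta_{uu}\succ 0$), to the LMI displayed in \eqref{eq:LMI}. Replacing the quadratic term in the objective by $\tfrac{1}{4}\gamma_i$ under this LMI yields exactly \eqref{eq:LMI}; the reformulation is equivalent because at optimality the epigraph constraint is tight.

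The main obstacle, and essentially the only delicate point, is justifying strong duality for the inner QP uniformly in $\theta$: strict convexity from $\theta_{uu}\succeq I_n$ gives coercivity, and linearity of the constraints provides a constraint qualification, so Slater-type arguments (or direct QP duality theorems) apply. Once that is in place, the algebraic manipulations — completing the square to obtain $g_i$, and the Schur-complement step — are routine.
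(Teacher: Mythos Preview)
Your proposal is correct and follows exactly the approach the paper indicates: the paper does not give a self-contained proof of Proposition~\ref{prop:LMI} but simply states that the reformulation ``is based on dualizing the inner maximization problems of \eqref{eq:loss_minimization} and combining the resulting minimization problems,'' citing \cite{akhtar2021learning}. Your three steps---QP duality on the inner problem, closed-form elimination of $u_i$ to obtain $g_i(\lambda_i)$, and a Schur-complement epigraph lift to the LMI---are precisely this argument, and the Schur step is also the one the paper itself invokes later (in the proof of Proposition~\ref{theo:heuristic selection}) when passing between the LMI and the inequality $\gamma_i \ge \|\hat{M}_i^\top\lambda_i + 2\theta_{su}^\top\phi(\hat{s}_i)\|_{\theta_{uu}^{-1}}^2$.
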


\subsection{The Kernel Method}
\label{sec:prelim:kernel-method}
The kernel method is a powerful technique used in machine learning and statistics that exploits the structure of data embedded in a higher-dimensional space.
The kernel method has found numerous applications, including Support Vector Machines \citep{cortes1995support}, Kernel Principal Component Analysis \citep{scholkopf1997kernel}, and Kernel Linear Discriminant Analysis \citep{baudat2000generalized}.
The fundamental idea behind the kernel method is to implicitly map input data into a higher-dimensional space without explicitly computing the transformation, thus enabling algorithms to capture complex patterns and non-linear relationships without heavy computational burden \cite{shafieezadeh2019regularization}.
The kernel method generalizes the hypothesis of the optimization problem to a nonlinear function class based on a \emph{Reproducing Kernel Hilbert Space} (RKHS) $\mathcal{H}$.
{
In this work, we primarily focus on lifting the original parametric optimization problem of the form $\min\limits_{\theta \in \Theta} \, \ell \left(\left\{\left(f_\theta(\hat{s}_i), \hat{s}_i,\hat{u}_i\right)\right\}^N_{i=1}, \|\theta\|_F \right)$, where $f_\theta: \mathcal{S} \rightarrow \mathbb{R}^n$ such that $f_{\theta_{su}}(s)^\top u \coloneqq \phi(s)^\top\theta_{su} u$, following Definition \eqref{eq:hypothesis}.
The lifted problem is then defined as $\min\limits_{f \in \mathcal{H}} \, \ell \left( \left\{\left(f(\hat{s}_i), \hat{s}_i,\hat{u}_i\right)\right\}^N_{i=1}, \|f\|_\mathcal{H} \right)$
with $f$ in an RKHS $\mathcal{H}$.
} 
By lifting the function class to $\mathcal{H}$, we effectively optimize over \textit{nonlinear} hypotheses.

{
In a \emph{vector-valued} RKHS $\mathcal{H}$ equipped with the inner product $\langle \cdot, \cdot \rangle_\mathcal{H}$, as defined in \cite{micchelli2005learning}, given a proper kernel function $K : \mathcal{S} \times \mathcal{S} \rightarrow \mathbb{R}^{n \times n}$ that is symmetric and positive definite, Moore-Aronszajn's reproducing kernels theory implies that there exists a unique RKHS with the reproducing properties induced by \emph{matrix-valued} $K$, such that $\forall f \in \mathcal{H} : f(s)^\top u = \langle f, K(\cdot, s) u \rangle_\mathcal{H}$ with a linear operator $K(\cdot, s): \mathbb{R}^n \rightarrow \mathcal{H}$.
}
Furthermore, the Riesz representation theorem states that for every $s \in \mathcal{S}$ and $u \in \mathbb{R}^n$, there exists a unique function $K(\cdot, s)u \in \mathcal{H}$ for all $f \in \mathcal{H}$.
Recall that the lifted optimization problem is formed over all \textit{nonlinear} hypotheses via $f \in \mathcal{H}$.
However, as a result of the reproducing property \cite{LearningWithKernelsBook}, we can write the lifted optimization problem in the infinite-dimensional inner product space.
Hence, the resulting optimization problem has the form 
\begin{equation}
    \label{eq:kernel-linear-opt-problem}
    \begin{aligned} 
        \min\limits_{f \in \mathcal{H}} \ell \left( 
        \left\{\Big(\big\langle f, K(\cdot, \hat{s}_i)u\big\rangle_\mathcal{H}, \hat{s}_i,\hat{u}_i\Big)\right\}^N_{i=1},
        \|f\|_\mathcal{H} \right).
    \end{aligned}
\end{equation}
In what follows, we show that the solution of the optimization problem \eqref{eq:kernel-linear-opt-problem} exists and is finite when the problem is built over a finite dataset $\hat{\mathcal{D}}$ of size $N$.
We leverage the Representer theorem \cite{shafieezadeh2019regularization}, which for an arbitrary loss function, as in \eqref{eq:kernel-linear-opt-problem}, states that the solution admits the kernel representation of the form $f^\star(s) = \sum_{i=1}^N K(\hat{s}_i, s)\alpha_i$ with $\alpha_i \in \mathbb{R}^n$.
This result effectively suggests that optimizing over an infinite-dimensional RKHS has a sparse solution over the linear hypotheses.

\section{Kernel Inverse Optimization}

In this section, we extend the Inverse Optimization (IO) model proposed by \cite{akhtar2021learning} to incorporate kernel methods.
We consider a hypothesis class of the form in Equation \eqref{eq:FOP}.
However, kernelizing such hypotheses is not straightforward.
Instead, we argue that by kernelizing the optimization problem associated with the loss function described in Equation \eqref{eq:loss_minimization}, we can obtain a forward optimization problem (FOP) that minimizes the kernelized objective $F(s, u)$.

{
In Theorem \ref{theo:kernel_reformulation}, we dualize the problem in \eqref{eq:loss_minimization} and show that the optimal solution for $\theta^\star_{su}$, when plugged into $\mathrm{FOP}(\theta_{su}^\star, \theta_{uu}, \hat{s})$, admits an affine function of $N$ coefficients w.r.t $\hat{s}$.
The resulting $\mathrm{FOP}$, obtained by Theorem \ref{theo:kernel_reformulation}, takes a form consistent with the representer theorem when the loss function and kernel are defined appropriately, as discussed later in this section.
}
\begin{theorem}[Kernel reformulation]
    \label{theo:kernel_reformulation}
    The Lagrangian dual of the optimization program \eqref{eq:LMI} is
    \begin{equation}
        \label{eq:dual}
        \begin{aligned} 
            \min_{P, \Lambda_i, \Gamma_i} \quad & \frac{1}{4k} \left\| \left( \sum_{i=1}^N \frac{\hat{u}_i\hat{u}_i^\top}{N} - \Lambda_i \right) - P \right\|_F^2  - {\rm Tr}(P) \\
            & + \frac{1}{k} \sum_{i=1}^N\sum_{j=1}^N \kappa(\hat{s}_i,\hat{s}_j) \left( \frac{\hat{u}_i}{N} - 2\Gamma_i \right)^\top \left( \frac{\hat{u}_j }{N} - 2\Gamma_j \right) \\
            \emph{s.t.} \quad \ \ & P \succeq 0, \ \Lambda_i \succeq 0, \ \Gamma_i \in \mathbb{R}^n & \forall i \leq N \\
            & \frac{\hat{s}_i}{N} - 2\hat{M}_i\Gamma_i \geq 0 & \forall i \leq N  \\
            & \begin{bmatrix}
            \Lambda_{i} & \Gamma_{i} \\
            * & \frac{1}{4N}
            \end{bmatrix} \succeq 0 & \forall i \leq N,
        \end{aligned}
    \end{equation}
    where $\kappa(\hat{s}_i,\hat{s}_j) = \phi(\hat{s}_i)^\top \phi(\hat{s}_j)$ is the \emph{scalar-valued} kernel function.
    The primal variables $\theta_{uu}$ and $\theta_{su}$ can be recovered using
    \begin{equation}
        \label{eq:dual_to_primal}
        \theta_{uu} = \frac{1}{2k}\left(P - \left(  \sum_{i=1}^N \frac{\hat{u}_i\hat{u}_i^\top}{N} - \Lambda_i \right)\right) \quad \emph{and} \quad \theta_{su} = \sum_{i=1}^N\phi(\hat{s}_i)\frac{1}{k}\left( 2\Gamma_i^\top - \frac{\hat{u}_i^\top }{N} \right).
    \end{equation}
\end{theorem}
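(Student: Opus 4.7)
The plan is to derive the Lagrangian dual of the SDP \eqref{eq:LMI} step by step and then identify the resulting program with \eqref{eq:dual}. Strong duality holds by Slater's condition (for instance, $\theta_{uu} = 2 I_n$, $\theta_{su} = 0$, $\lambda_i = 0$, and any sufficiently large $\gamma_i$ is strictly feasible), so it suffices to carry out the inner minimization carefully; the recovery formulas \eqref{eq:dual_to_primal} will then appear automatically as the stationarity conditions.

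Concretely, I would introduce dual variables $P \succeq 0$ for $\theta_{uu} \succeq I_n$, a block-partitioned multiplier $Y_i = \bigl[\begin{smallmatrix}\Lambda_i & \Gamma_i \\ * & \beta_i\end{smallmatrix}\bigr] \succeq 0$ for each LMI, and $\mu_i \geq 0$ for $\lambda_i \geq 0$. Minimizing the Lagrangian over $\gamma_i$ is linear and pins $\beta_i = 1/(4N)$, which is precisely the fixed bottom-right entry of the dual LMI. Minimizing over $\lambda_i$ is also linear and pins $\mu_i = \hat{W}_i/N - 2\hat{M}_i \Gamma_i$, and combining with $\mu_i \geq 0$ yields the linear inequality in \eqref{eq:dual}. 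Minimizing the resulting quadratic in $\theta_{uu}$ is a routine Frobenius-norm completion: the critical point is exactly the first equation of \eqref{eq:dual_to_primal}, and substituting it back contributes the $\frac{1}{4k}\|\cdot\|_F^2$ term together with the $-\mathrm{Tr}(P)$ coming from $\mathrm{Tr}(P \cdot I_n)$ in the Lagrangian after negating to rewrite $\max$ as $\min$.

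The crux of the argument is the minimization over $\theta_{su}$. In the Lagrangian, $\theta_{su}$ appears only through expressions of the form $\phi(\hat{s}_i)^\top \theta_{su}(\cdot)$, coming from the data-fitting term $2\phi(\hat{s}_i)^\top \theta_{su} \hat{u}_i$ and from the off-diagonal block $2\theta_{su}^\top \phi(\hat{s}_i)$ of the primal LMI. Collecting these terms and differentiating gives $\theta_{su}^\star = \sum_i \phi(\hat{s}_i) \tfrac{1}{k}\bigl(2 \Gamma_i^\top - \hat{u}_i^\top/N\bigr)$, which is the second formula in \eqref{eq:dual_to_primal}. Plugging back produces the term $-\tfrac{1}{4k}\bigl\|\sum_i (2\hat{u}_i/N - 4\Gamma_i) \phi(\hat{s}_i)^\top\bigr\|_F^2$; expanding this squared Frobenius norm as a double sum turns it into $-\tfrac{1}{k}\sum_{i,j} \phi(\hat{s}_i)^\top \phi(\hat{s}_j) \bigl(\hat{u}_i/N - 2\Gamma_i\bigr)^\top \bigl(\hat{u}_j/N - 2\Gamma_j\bigr)$. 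Recognizing the inner product as $\kappa(\hat{s}_i, \hat{s}_j)$ and flipping the overall sign via the $\max\to\min$ conversion produces exactly the kernel term in \eqref{eq:dual}.

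The main obstacle I anticipate is careful bookkeeping of signs, transposes, and matrix-derivative conventions in the $\theta_{su}$ step, since the goal is to make $\phi$ appear \emph{only} through the scalar kernel $\kappa$—this is precisely the structural property that later enables a kernelized FOP without ever evaluating the feature map explicitly. Once that identification is verified, the remaining constraints of \eqref{eq:dual}, namely $P \succeq 0$, the dual LMI on $(\Lambda_i, \Gamma_i)$, and the inequality derived from $\mu_i \geq 0$, are exactly those imposed during dualization, and \eqref{eq:dual_to_primal} is read off directly from the stationarity conditions already obtained in the inner minimization.
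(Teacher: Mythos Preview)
Your proposal is correct and follows essentially the same route as the paper's proof: introduce Lagrange multipliers $P$, a block multiplier $\bigl[\begin{smallmatrix}\Lambda_i & \Gamma_i\\ * & \alpha_i\end{smallmatrix}\bigr]$, and $\tilde\lambda_i$ (your $\mu_i$), set the partial derivatives of the Lagrangian in $\theta_{uu},\theta_{su},\lambda_i,\gamma_i$ to zero to obtain the stationarity conditions (yielding \eqref{eq:dual_to_primal}, $\alpha_i=1/(4N)$, and $\tilde\lambda_i=\hat W_i/N-2\hat M_i\Gamma_i$), and substitute back. Your explicit invocation of Slater's condition and the detailed expansion of the $\theta_{su}$ term into the double-sum kernel form are spelled out more fully than in the paper, but the argument is the same.
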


\begin{proof}
    See Appendix \ref{proof 1}.
\end{proof}

Notice that the complexity of the optimization program \eqref{eq:dual} does not depend on the dimensionality of the feature vector $\phi(\hat{s}_i)$.
Consequently, this allows us to use kernels generated from infinite-dimensional feature spaces, e.g., the Gaussian (a.k.a. radial basis function) kernel $\kappa(\hat{s}_i, \hat{s}_j) = \exp(-\gamma \|\hat{s}_i - \hat{s}_j \|_2^2)$.
Program \eqref{eq:dual} is a convex optimization problem and can be solved using off-the-shelf solvers, such as MOSEK \cite{aps2019mosek}.
Once solved, we can recover the optimal primal variables $\theta_{uu}^\star$ and $\theta_{su}^\star$ from the optimal dual variables $\Lambda_i^\star,\Gamma_i^\star$ and $P^\star$ using \eqref{eq:dual_to_primal}.
Notice that the dimensionality of $\theta_{su}^\star$ depends on $\phi$, meaning it can be an infinite-dimensional matrix.
However, our ultimate goal is to learn an FOP that replicates the behavior observed in the data. By combining \eqref{eq:dual_to_primal} with \eqref{eq:FOP}, we have that, for the signal $(\hat{s}_\text{new}, \hat{M}_\text{new}, \hat{W}_\text{new})$, the resulting FOP is
\begin{equation}
    \label{eq:FOP_kernel}
    \min_{\hat{M}_\text{new} u \leq \hat{W}_\text{new}} u^\top \frac{1}{2k} \left( P^\star - \left( \sum_{i=1}^N \frac{\hat{u}_i\hat{u}_i^\top}{N} - \Lambda_i^\star \right) \right) u + \sum_{i=1}^N \kappa(\hat{s}_\text{new}, \hat{s}_i)\frac{2}{k} \left( 2\Gamma_i^\star- \frac{\hat{u}_i }{N} \right)^\top u,
\end{equation}
which again does not depend on the dimensionality of $\phi$, but only on the kernel function $\kappa$.
However, a key difference between solving the IO problem using the kernel reformulation of Theorem \ref{theo:kernel_reformulation} and other classical IO approaches (e.g., \citep{mohajerin2018data, akhtar2021learning, zattoniscroccaro2024learning}) is that the resulting FOP \eqref{eq:FOP_kernel} explicitly depends on the entire training dataset $\hat{\mathcal{D}}$.
These models are sometimes called \textit{nonparametric models} \cite{murphy2022probabilistic}, indicating that the number of parameters of the model (in our case, $P^\star$, $\Lambda_i^\star$, and $\Gamma_i^\star$ for all $i \leq N$) depends on the size of the training dataset.

\begin{remark}[A potential variant of representer theorem]
    In the primal problem \eqref{eq:LMI}, a regularized empirical risk loss is optimized over a set of constraints.
    In the learned objective function of the FOP~\eqref{eq:FOP_kernel}, the term related to the features $\hat{s}_i$ (the linear coefficient of the optimizer $u$) can be represented as a finite linear combination of kernel products evaluated on the input points in the training set data, i.e., { \( f^\star(\cdot) = \sum_{i=1}^N \kappa (\hat{s}_i, \cdot) \alpha_i \) with $\alpha \in \mathbb{R}^n$. A similar forward optimization problem can be obtained using a loss function defined in a \emph{vector-valued} RKHS $\mathcal{H}$ with a corresponding kernel function $K$ for functions $f: \mathcal{S} \rightarrow \mathbb{R}^n$, as discussed in Section \ref{sec:prelim:kernel-method}, via
    \[ \ell_{\mathcal{H}}(f, \theta_{uu}, \hat{\mathcal{D}}) = k\|f\|_{\mathcal{H}} + \frac{1}{N}\sum_{i=1}^N \max\limits_{u_i \in \mathbb{U}(\hat{s_i})} \big\{F(\hat{s}_i, \hat{u}_i; \theta_{uu}, f) - F(\hat{s}_i, u_i; \theta_{uu}, f)\big\}, \]
    where 
    $ F(s, u; \theta_{uu}, f) = u^\top\theta_{uu}u + \langle f, K(\cdot, s) u\rangle_{\mathcal{H}} $
    and $K(s, s') \in \mathbb{R}^n \times \mathbb{R}^n$ is set to be a diagonal matrix, with diagonal entries corresponding to the same scalar kernel $\kappa$, such that:
    $ K(s, s')_{jj} = \kappa(s, s')$ for $j \in \{1, \ldots, n\}.$
    Based on the representer theorem, the solution to the optimization problem 
    $ \min\limits_{f \in \mathcal{H}} \, \ell_{\mathcal{H}}(f, \theta_{uu}, \hat{\mathcal{D}}) $
    admits the form
    $ f^\star(s) = \sum_{i=1}^N K(s, \hat{s}_i)\alpha_{i} = \sum_{i=1}^N \kappa(s, \hat{s}_i)\alpha_{i}$.
    This result indicates that the learned FOP \eqref{eq:FOP_kernel} exhibits characteristics consistent with the representer theorem, implying a potential variant in the context of inverse optimization.}
\end{remark}

In the class of cost functions studied in this paper \eqref{eq:hypothesis}, $\theta_{uu}$ can be interpreted as a matrix that penalizes the components of the decision vector $u$.
However, in many problems, it is assumed that the expert generating the data equally penalizes each dimension of $u$, or equivalently, uses $\theta_{uu} = I_n$.
This assumption holds, for instance, in the Gymnasium MuJoCo environments \cite{towers_gymnasium_2023}, where the reward settings for all tasks apply the same penalty to each dimension of the decision vector.
Intuitively, this means that the expert trained under such reward settings aims to reduce the magnitude of the decision vector uniformly across all dimensions, rather than favoring any specific dimension.
Therefore, assuming $\theta_{uu} = I_n$ as prior knowledge can reduce model complexity and lead to faster training, without degrading model performance.

\begin{corollary}
[Kernel reformulation for $\theta_{uu} = I_n$]
\label{corrolary:kernel_reformulation_simpler}
The Lagrangian dual of the optimization program \eqref{eq:LMI} with $\theta_{uu} = I_n$ is 
\begin{equation}
    \label{eq:dual_simpler}
    \begin{aligned} 
        \min_{\Lambda_i, \Gamma_i : \ \forall i \in S} \quad & \frac{1}{k} \sum_{i=1}^N\sum_{j=1}^N \kappa(\hat{s}_i,\hat{s}_j) \left( \frac{\hat{u}_i}{N} - 2\Gamma_i \right)^\top \left( \frac{\hat{u}_j }{N} - 2\Gamma_j \right)  + \sum_{i=1}^N{\rm Tr}(\Lambda_i) \\
        \emph{s.t.} \quad \ \ & \Lambda_i \succeq 0, \ \Gamma_i \in \mathbb{R}^n & \forall i \in S \\
        & \frac{\hat{W}_i}{N} - 2\hat{M}_i\Gamma_i \geq 0 & \forall i \in S  \\
        & \begin{bmatrix}
        \Lambda_{i} & \Gamma_{i} \\
        * & \frac{1}{4N}
        \end{bmatrix} \succeq 0 & \forall i \in S,
    \end{aligned}
\end{equation}
where $\kappa(\hat{s}_i,\hat{s}_j) = \phi(\hat{s}_i)^\top \phi(\hat{s}_j)$ is the kernel function and $S=\{1, \ldots, N\}$.
The primal variable $\theta_{su}$ can be recovered using \eqref{eq:dual_to_primal}.
\end{corollary}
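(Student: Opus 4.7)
The plan is to specialize the primal program \eqref{eq:LMI} by hard-coding $\theta_{uu} = I_n$ and then replay the Lagrangian duality derivation already carried out for Theorem \ref{theo:kernel_reformulation}, keeping careful track of which terms vanish.

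First, I would substitute $\theta_{uu} = I_n$ into \eqref{eq:LMI}. This turns the regularizer $k\|\theta_{uu}\|_F^2 = kn$ and the quadratic contribution $\hat{u}_i^\top\hat{u}_i$ of $F_\theta(\hat{s}_i,\hat{u}_i)$ into constants that can be dropped from the objective. The constraint $\theta_{uu} \succeq I_n$ becomes vacuous and is removed, and each block LMI collapses to $\begin{bmatrix} I_n & \hat{M}_i^\top\lambda_i + 2\theta_{su}^\top\phi(\hat{s}_i) \\ * & \gamma_i \end{bmatrix} \succeq 0$. The surviving primal variables are $\theta_{su}$, $\lambda_i \in \mathbb{R}_+^d$, and $\gamma_i \in \mathbb{R}$.

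Next, following the template of Theorem \ref{theo:kernel_reformulation}, I would attach to each LMI a PSD multiplier $Q_i = \begin{bmatrix} \Lambda_i & \Gamma_i \\ * & \eta_i \end{bmatrix} \succeq 0$, and attach a nonnegative multiplier to $\lambda_i \geq 0$ that will ultimately produce the dual feasibility constraint $\hat{W}_i/N - 2\hat{M}_i\Gamma_i \geq 0$. Minimizing the Lagrangian over the unconstrained scalar $\gamma_i$ pins $\eta_i = 1/(4N)$, reproducing the $(2,2)$ entry displayed in \eqref{eq:dual_simpler}. Minimizing over $\theta_{su}$, which now enters only through $k\|\theta_{su}\|_F^2$ and the inner-product terms $\phi(\hat{s}_i)^\top\theta_{su}(\cdot)$, recovers the formula for $\theta_{su}$ from \eqref{eq:dual_to_primal}; upon substitution, all $\phi$-dependence collapses into the scalar kernel $\kappa(\hat{s}_i,\hat{s}_j)=\phi(\hat{s}_i)^\top\phi(\hat{s}_j)$, producing the double-sum term of \eqref{eq:dual_simpler}.

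The only structural difference relative to Theorem \ref{theo:kernel_reformulation} is that the matrix dual $P$, which previously arose from jointly dualizing $\theta_{uu} \succeq I_n$ and completing the square in $k\|\theta_{uu}\|_F^2$, no longer exists, so both the Frobenius term $\tfrac{1}{4k}\|\cdot - P\|_F^2$ and the linear term $-\text{Tr}(P)$ disappear. What is left from the $(1,1)$ block of the LMI is the Lagrangian pairing $\langle \Lambda_i, I_n\rangle = \text{Tr}(\Lambda_i)$, which—after the standard sign flip in passing from the inner max to the outer min of the dual—yields the additive term $\sum_i \text{Tr}(\Lambda_i)$ in \eqref{eq:dual_simpler}. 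The main obstacle is purely bookkeeping: the signs and $1/N$ scalings must line up exactly with the displayed form, but since this algebra has already been worked out in the proof of Theorem \ref{theo:kernel_reformulation}, the derivation reduces to re-running those same steps with all $\theta_{uu}$-related terms excised.
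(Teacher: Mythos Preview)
Your proposal is correct and follows exactly the approach indicated in the paper, which states that the proof of Corollary \ref{corrolary:kernel_reformulation_simpler} is the same as that of Theorem \ref{theo:kernel_reformulation} under the assumption $\theta_{uu}=I_n$ and is therefore omitted. Your additional remarks about why the multiplier $P$ and the Frobenius term disappear, and why the pairing of $\Lambda_i$ with the $(1,1)$ block $I_n$ produces the $\sum_i \mathrm{Tr}(\Lambda_i)$ term, are accurate elaborations of that same derivation.
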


The proof of Corollary \ref{corrolary:kernel_reformulation_simpler} is the same as the proof of Theorem \ref{theo:kernel_reformulation} under the assumption that $\theta_{uu}=I_n$, and is therefore omitted here.
$S$ represents an index set, where all decision variables whose indices belong to $S$ will be optimized, while decision variables whose indices do not belong to $S$ retain their original values and are treated as constants.
In Corollary \ref{corrolary:kernel_reformulation_simpler}, all variables will be optimized, so $S$ is the set comprising natural numbers from $1$ to $N$.
The concept of the index set $S$ is introduced to make Problem \eqref{eq:dual_simpler} compatible with the sub-optimization problems based on the coordinate descent method outlined in Section \ref{section sso}.

In the following section, we will focus on algorithms to solve Problem \eqref{eq:dual_simpler}.
However, all ideas also apply to the general problem \eqref{eq:dual}.

\section{Sequential Selection Optimization}
\label{section sso}
Solving the kernel IO problem \eqref{eq:dual_simpler} involves optimizing a Semidefinite Program (SDP), which can become prohibitively costly if the number of semidefinite constraints and optimization variables grows too large.
In our case, the size of the SDP grows quadratically with the size of the training dataset $N$.
For instance, in our experiments, solving \eqref{eq:dual_simpler} with $N = 20000$ using CVXPY \cite{diamond2016cvxpy} requires up to 256GB of memory.
Therefore, in this section, we propose a coordinate descent-type algorithm to find an approximate solution to \eqref{eq:dual_simpler} by iteratively optimizing only a subset of the coordinates at each iteration, keeping all other coordinates fixed.
We define a pair of variables \(\Lambda_i\) and \(\Gamma_i\) as the $i$-th coordinate, denoted as \(\{\Lambda_i, \Gamma_i\}\).
In Problem \eqref{eq:dual_simpler}, each coordinate is decoupled in the constraints, which enables the use of the coordinate descent framework here.
We call this method \textit{Sequential Selection Optimization} (SSO), and present it in Algorithm \ref{alg:SSO}.

\begin{algorithm}
    \caption{Sequential Selection Optimization (SSO)}
    \begin{algorithmic}[1]
        \State Initialize $\{\Lambda_i, \Gamma_i\}_{i=1}^N$
        \For{$t = 1, \ldots, T$}
            \State Select a batch of $p$ coordinates $S = \{a_i\}_{i=1}^p$, where $a_i \in \{1, \ldots, N\}$
            \State Update $\{ \Lambda_{a_i}, \Gamma_{a_i}\}_{i=1}^p$ based on \eqref{eq:dual_simpler} with $S$
        \EndFor
    \end{algorithmic}
    \label{alg:SSO}
\end{algorithm}

Here, we explain each step of Algorithm \ref{alg:SSO}: (i) Initialization of the optimization variables.
In general, the variables are usually initialized randomly or set to 0 or 1.
These methods are simple but may not provide a good initial guess.
(ii) Selection of a batch of $p$ coordinates.
The most straightforward approach to selecting $p$ coordinates is to choose them cyclically.
Alternatively, we can select the coordinates at random at each iteration (not necessarily with equal probability).
Lastly, we can choose coordinates greedily, selecting the components corresponding to the greatest descent or those with the largest gradient or subgradient at the current iteration \cite{shi2016primer}.
(iii) Solving the KIO subproblem to update the selected coordinates.
The mathematical expression of the subproblem is Problem \eqref{eq:dual_simpler} with $S$, where $S$ is a set containing the indices of the coordinates that need to be updated.
Note that the coordinates whose indices $\notin S$ remain fixed.
Therefore, the number of quadratic terms in \eqref{eq:dual_simpler} scales with $|S|^2$ rather than $N^2$.

Next, we propose two heuristics to accelerate the convergence speed of the SSO algorithm:
a heuristic method for choosing which coordinates (line 3 of Algorithm \ref{alg:SSO}) to optimize, and a warm-up trick to improve the initialization of the optimization variables (line 1 of Algorithm \ref{alg:SSO}).

\subsection{Heuristic for choosing coordinates}
\label{sec:heuristic}

At each iteration of Algorithm \ref{alg:SSO}, intuitively, the largest improvement will be made by updating the ``least optimal'' set of $p$ variables.
One way to evaluate their degree of suboptimality is to choose the variables with the most significant violation of the Karush-Kuhn-Tucker (KKT) conditions of the primal version of Program \eqref{eq:dual_simpler} (i.e., \eqref{eq:LMI} with $\theta_{uu} = I_n$), inspired by the Sequential Minimal Optimization (SMO) method from \cite{platt1998sequential}.

\begin{proposition}[]
    \label{theo:heuristic selection}
    For optimal decision variables of Problem \eqref{eq:dual_simpler}, the coordinate $\{ \Lambda_{i}, \Gamma_{i} \}$ that satisfies 
    \begin{equation}
        \label{eq:condition1}
        \frac{\hat{W}_i}{N} - 2\hat{M}_i\Gamma_i > 0,
    \end{equation}
    should also satisfy
    \begin{equation}
        \label{eq:condition2} 
        {\rm Tr} \left(\begin{bmatrix}
        \Lambda_{i} & \Gamma_{i} \\
        * & \frac{1}{4N}
        \end{bmatrix}
        \begin{bmatrix}
        I_n & 2 \theta_{s u}^{\top} \phi(\hat{s}_i) \\
        * & \|2 \theta_{s u}^{\top}\phi(\hat{s}_i)\|_2^2
        \end{bmatrix} \right)=0.
    \end{equation}
\end{proposition}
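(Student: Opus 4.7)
The plan is to read Proposition \ref{theo:heuristic selection} as a statement about complementary slackness for the convex conic duality between \eqref{eq:dual_simpler} and its primal, which by Corollary \ref{corrolary:kernel_reformulation_simpler} is precisely \eqref{eq:LMI} specialised to $\theta_{uu}=I_n$. Under this identification, $\Lambda_i$ and $\Gamma_i$ are the dual variables attached to the LMI block in \eqref{eq:LMI}, while the scalar inequality $\tfrac{\hat{W}_i}{N}-2\hat{M}_i\Gamma_i\ge 0$ plays the role of the dual feasibility constraint associated with the nonnegative multipliers $\lambda_i\in\mathbb{R}_+^{d}$ that appear in the primal objective as $\hat{W}_i^\top \lambda_i$ and in the LMI through $\hat{M}_i^\top \lambda_i$. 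So the first step is to verify Slater's condition for \eqref{eq:LMI} (e.g.\ by picking any strictly feasible $\lambda_i,\gamma_i$ together with $\theta_{uu}=I_n,\theta_{su}=0$), which guarantees strong duality and thus validates the primal–dual KKT system.

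Next, I would write out the relevant complementary slackness relations at a primal–dual optimum. The linear pair gives, coordinatewise,
\begin{equation*}
\lambda_i\odot\Bigl(\tfrac{\hat{W}_i}{N}-2\hat{M}_i\Gamma_i\Bigr)=0,
\end{equation*}
so the strict inequality \eqref{eq:condition1} forces $\lambda_i^\star=0$. Substituting $\lambda_i^\star=0$ and $\theta_{uu}=I_n$ into the primal LMI in \eqref{eq:LMI} reduces it to
\begin{equation*}
\begin{bmatrix} I_n & 2\theta_{su}^\top\phi(\hat{s}_i) \\ * & \gamma_i \end{bmatrix}\succeq 0.
\end{equation*}
Because $\gamma_i$ appears in the primal objective only through the linear term $\tfrac{1}{4}\gamma_i>0$, the optimal $\gamma_i^\star$ is driven to the Schur-complement boundary, yielding $\gamma_i^\star=\|2\theta_{su}^\top\phi(\hat{s}_i)\|_2^2$ and making the primal LMI matrix exactly the second factor in \eqref{eq:condition2}.

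Finally, I would invoke the SDP complementary slackness between this primal LMI block and its dual block $\begin{bmatrix}\Lambda_i & \Gamma_i \\ * & \tfrac{1}{4N}\end{bmatrix}\succeq 0$, which at optimality requires the trace of their product to vanish; this is precisely identity \eqref{eq:condition2}. The only subtle point, and the main obstacle, is carefully matching the two sets of KKT multipliers to the correct constraints in \eqref{eq:LMI}, including the sign convention used when dualising $\lambda_i\ge 0$, and confirming that the $\gamma_i$-optimisation collapses to equality rather than remaining slack. Both can be checked by inspecting the Lagrangian already constructed in the proof of Theorem \ref{theo:kernel_reformulation}, which gives a direct and self-contained derivation of the KKT conditions needed here.
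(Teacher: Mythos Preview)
Your proposal is correct and follows essentially the same route as the paper's own proof: both exploit the KKT system of the primal \eqref{eq:LMI} with $\theta_{uu}=I_n$, use the componentwise complementary slackness between $\lambda_i$ and $\tfrac{\hat W_i}{N}-2\hat M_i\Gamma_i$ to force $\lambda_i^\star=0$ under \eqref{eq:condition1}, argue via the Schur complement that $\gamma_i^\star=\|2\theta_{su}^\top\phi(\hat s_i)\|_2^2$, and then read off \eqref{eq:condition2} from the SDP complementary slackness on the LMI block. Your explicit mention of Slater's condition is a small addition of rigor not spelled out in the paper, but otherwise the arguments coincide.
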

\begin{proof}
    See Appendix \ref{proof 2}.
\end{proof}

Proposition \ref{theo:heuristic selection} is based on KKT conditions.
Based on Condition \eqref{eq:condition2}, we can define the KKT violation condition as
\begin{equation}
    \label{eq:KKT_violator} 
    \texttt{kkt\_violator(i)} \coloneqq \left| \text{Tr} \left(\begin{bmatrix}
    \Lambda_{i} & \Gamma_{i} \\
    * & \frac{1}{4N}
    \end{bmatrix}
    \begin{bmatrix}
    I_n & 2 \theta_{s u}^{\top} \phi(\hat{s}_i) \\
    * & \|2 \theta_{s u}^{\top}\phi(\hat{s}_i)\|_2^2
    \end{bmatrix} \right) \right|.
\end{equation}
Using the violation condition \eqref{eq:KKT_violator}, we can establish the following heuristic method to construct the set $S$ in Algorithm \ref{alg:SSO}: given the current values of $\{\Lambda_i,\Gamma_i\}_{i=1}^N$ at iteration $t$, we choose the $p$ coordinates that satisfy Condition \eqref{eq:condition1} with the maximum KKT violation \eqref{eq:KKT_violator}.
In practice, we additionally select some random coordinates to update at each iteration, ensuring that all coordinates have the chance to be updated, including those that initially do not meet the criteria specified in Condition \eqref{eq:condition1} and would otherwise never be updated.
This random selection is inspired by the proven convergence of coordinate descent algorithms with uniformly random and cyclic coordinate selection \cite{nesterov2012efficiency, bertsekas2015convex}.

\subsection{Warm-up trick for improved initialization}
\label{sec:warm-up}

Another component of Algorithm \ref{alg:SSO} that may have a significant practical impact is how the optimization variables $\{\Lambda_i, \Gamma_i\}_{i=1}^N$ are initialized. A poor initial guess (e.g., $\Lambda_i = \Gamma_i = 0$) can lead to slow solver convergence or even result in numerical instability. Here, we propose a simple warm-up trick that leads to a better initialization of the optimization variables. First, we divide the original dataset $\hat{\mathcal{D}}$ into $n$ non-overlapping sub-datasets $\hat{\mathcal{D}}_1, \ldots, \hat{\mathcal{D}}_n$ and solve the $n$ small problems \eqref{eq:dual_simpler} for each of these sub-datasets ($N_i = |\hat{\mathcal{D}}_i|$ and $S$ is the set of indices of all the data in $\hat{\mathcal{D}}_i$). We then concatenate the optimal solutions of these $n$ solved small problems to form an initial guess. Even when the original problem \eqref{eq:dual_simpler} is intractable due to a large training dataset (i.e., large $N$), each subproblem remains tractable for a small enough batch size $N_i$, and its solutions are still feasible with respect to \eqref{eq:dual_simpler}.

\section{Numerical Experiments }
\label{Numerical Experiments}

\subsection{Performance Evaluation}
In this evaluation, KIO is implemented in its simplified version \eqref{eq:dual_simpler}, incorporating a Gaussian kernel, and tested on continuous control datasets from the D4RL benchmark \cite{fu2020d4rl}.
The model is trained using the SSO Algorithm \ref{alg:SSO}.
In each task, the model's performance is assessed over 100 test episodes, and the score\footnote{Regarding the definition of the score for one episode, we refer readers to the official documentation of Gymnasium \cite{towers_gymnasium_2023} and the D4RL paper \cite{fu2020d4rl}.} for KIO is the average score across these 100 episodes.
The parentheses following KIO scores indicate the amount of data used.

For comparison, four additional agents are selected for this experiment.
\textbf{IO} is the inverse optimization model without the kernel method, introduced in Proposition \ref{prop:LMI}.
To illustrate the effect of the kernel method, both the \textbf{KIO} and \textbf{IO} models are trained on identical datasets.
The scores of two behavior cloning agents, \textbf{BC(TD3+BC)} \cite{fujimoto2021minimalist} and \textbf{BC(CQL)} \cite{kumar2020conservative}, are taken from two offline reinforcement learning algorithms in which the entire dataset of 1 million samples was used for training.
These papers implemented their respective behavior cloning agents using D4RL datasets, serving as baselines to compare against their proposed offline reinforcement learning algorithms.
In these studies, \textbf{BC(TD3+BC)} and \textbf{BC(CQL)} were evaluated over $10$ seeds and $3$ seeds, respectively.
The \textbf{Teacher} is the agent responsible for generating the dataset and serves as the target for imitation learning in this experiment.

\begin{table}[!h]
 \caption{Performance of KIO, IO, two Behavior Cloning (BC) agents, and the Teacher agent on MuJoCo tasks from the D4RL benchmark on the normalized return metric.
 The numbers in parentheses represent the amount of data used by KIO and IO, and the score for KIO in each task is the average score over 100 episodes.}
 \label{results1}
 \centering
 \begin{tabular}{cccccc}
    \toprule
    \textbf{Task} & \textbf{KIO} & \textbf{IO} & \textbf{BC(TD3+BC)}\cite{fujimoto2021minimalist} & \textbf{BC(CQL)}\cite{kumar2020conservative} & \textbf{Teacher}\\
    \midrule 
    Hopper-expert & \textbf{109.9}\,(5k) & 31.8 & \textbf{111.5} & \textbf{109.0} & \textbf{108.5}\\
    Hopper-medium & \textbf{50.2}\,(5k) & 20.6 & 30.0 & 29.0 & 47.2\\
    Walker2d-expert & 108.5\,(10k) & 0.9 & 56.0 & \textbf{125.7} & 107.1\\     
    Walker2d-medium & \textbf{74.6}\,(5k) & 0.0 & 11.4 & 6.6 & 68.1\\
    Halfcheetah-expert & 84.4\,(10k) & -1.7 & 105.2 & \textbf{107.0} & 88.1\\
    Halfcheetah-medium & \textbf{39.0}\,(5k) & -3.1 & 36.6 & 36.1 & \textbf{40.7}\\
    \bottomrule 
 \end{tabular}
\end{table}

\paragraph{Evaluation for KIO.} Table \ref{results1} presents the final experimental results, where KIO achieves competitive scores in four out of six tasks.
In these tasks, except for a slightly lower score in the Halfcheetah-expert task compared to the teacher agent, KIO's scores are either close to or exceed those of the teacher agent, indicating strong learning capabilities in complex control tasks.
However, without the kernel method, the IO model demonstrates weak learning capabilities, achieving low scores in the Hopper task and failing to learn in the other two more challenging tasks.
We argue that the weak performance of the IO model is due to the limitations of its hypothesis class, which lacks the richness needed to learn an effective policy for imitation learning tasks.
This limitation arises from its reliance on predefined feature spaces, which may fail to capture the complexities of more sophisticated environments.
All hyperparameters used in this experiment for KIO are listed in Appendix \ref{B-1}.

\makeatletter
\newcommand\figcaption{\def\@captype{figure}\caption}
\newcommand\tabcaption{\def\@captype{table}\caption}
\makeatother

\begin{figure}[htbp]
    \centering
    \begin{minipage}[c]{0.52\textwidth}
        \centering
        \resizebox{\textwidth}{!}{
        \begin{tabular}{ccccccc}
            \toprule
            \multicolumn{2}{c}{\multirow{2}{*}{\textbf{Task}}} &
            \multicolumn{2}{c}{\textbf{SCS}} &&  \multicolumn{2}{c}{\textbf{SSO}}\\
            \cline{3-4} \cline{6-7}
            \multicolumn{2}{c}{} &\textbf{Obj Value}&\textbf{Score} && \textbf{Obj Value}&\textbf{Score}\\
            \midrule
            \multicolumn{2}{c}{Hopper-expert} &185.219&109.9 && 185.220&110.2\\
            \multicolumn{2}{c}{Hopper-medium} &218.761&50.2 && 218.761&51.8\\
            \multicolumn{2}{c}{Walker2d-expert} &140.121&108.5 && 140.121&109.2\\
            \multicolumn{2}{c}{Walker2d-medium} &151.117&74.6 && 151.117&74.9\\
            \multicolumn{2}{c}{Halfcheetah-expert} &165.041&84.4 && 165.041&83.8\\
            \multicolumn{2}{c}{Halfcheetah-medium} &188.184&39.0 && 188.184&39.7\\
            \bottomrule
        \end{tabular}
        }
        \tabcaption{Final Objective Function Value and Score (average return over 100 evaluations) for SCS \cite{ocpb:16} and SSO (20 iterations for all tasks) algorithms.
        The ultimate Objective Function Values of the two algorithms are nearly identical, yet across the majority of tasks, SSO achieves a slightly higher score compared to SCS.\label{table3.1}}
    \end{minipage}
    \hfill
    \begin{minipage}[c]{0.45\textwidth}
        \centering
        \includegraphics[width=\textwidth]{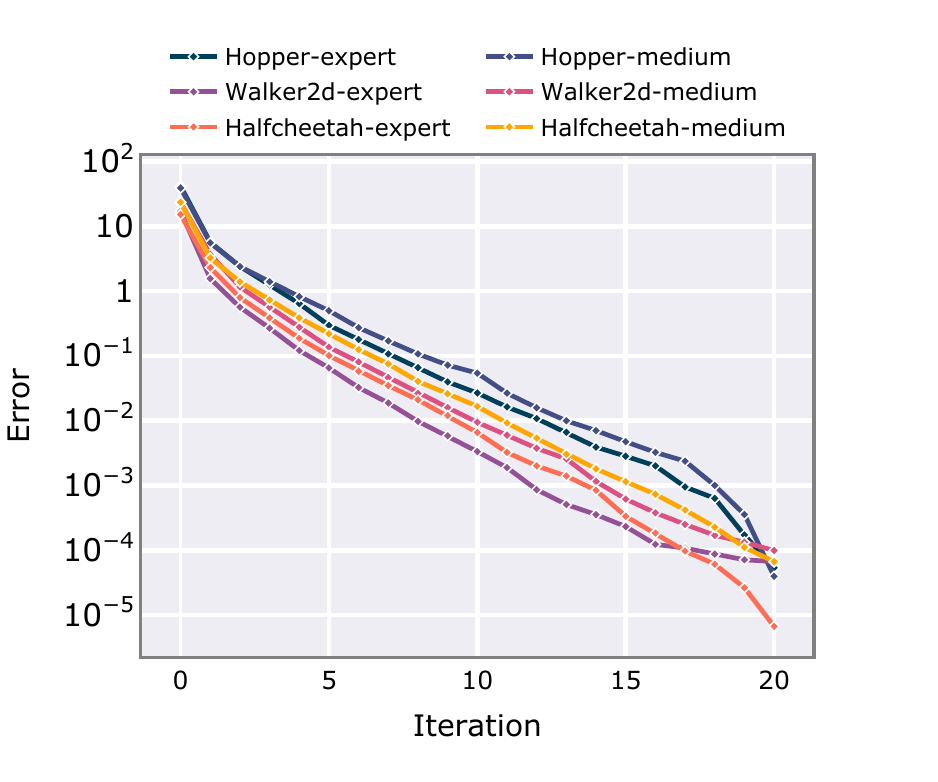} 
        \setlength{\abovecaptionskip}{-0.4cm}
        \figcaption{Convergence curves for SSO.\label{3.1}}
    \end{minipage}
    
\end{figure}

\paragraph{Evaluation for SSO.} Table \ref{table3.1} presents the optimal objective function values and task scores obtained by the centralized algorithm, Splitting Conic Solver (SCS) \cite{ocpb:16}, and the distributed algorithm, Sequential Selection Optimization (SSO), for solving Problem \eqref{eq:dual_simpler}.
In this experiment, SCS is employed to directly address the large-scale problem \eqref{eq:dual_simpler} with $|S|=N$.
The corresponding solution is evaluated 100 times, and the average score is taken as the final result.
Meanwhile, SSO addresses the large-scale problem by solving a series of subproblems.
After each iteration, the current solution is evaluated 100 times, and the average score is recorded.
After 20 iterations, there are 20 corresponding scores, and we select the highest score along with the objective function value from the last iteration and report it.
The results show that SSO and SCS yield nearly identical optimal objective function values.
However, except for the Halfcheetah-medium task, SSO achieved higher scores across all other tasks.
Figure \ref{3.1} displays the convergence performance of the SSO algorithm across six distinct tasks, with the horizontal axis representing the number of iterations and the vertical axis representing the error between the current objective function value and the optimal objective function value (calculated by SCS) in Problem \eqref{eq:dual_simpler}.
The SSO algorithm demonstrates a fast convergence rate.
By the 10th iteration, the errors for all tasks are below 0.1, and by the 20th iteration, the errors have further diminished to approximately 1e-4 for all tasks.

In the previous evaluation of the SSO algorithm, we limited the maximum training data to 10k to ensure that we could directly solve Problem \eqref{eq:dual_simpler} without SSO.
Thereby, we were able to compare the results with and without the SSO algorithm.
However, to further verify the effectiveness of the SSO algorithm, we tested it on a new task (the medium-expert dataset) using 100k data points.
At this scale, due to memory limitations, we were unable to solve Problem \eqref{eq:dual_simpler} directly without the SSO algorithm, thus the effectiveness of the SSO algorithm is inferred solely from its experimental performance.
Table \ref{results3} presents the results of the KIO model optimized by the SSO algorithm.
The results indicate that the KIO model achieves competitive results and scales effectively to larger data sizes.
We list the hyperparameters used in this experiment in Appendix \ref{B-1}.

\begin{table}[!ht]
\caption{Performance of KIO, two Behavior Cloning (BC) agents, and the Teacher agent on MuJoCo tasks from the D4RL benchmark on the normalized return metric.
The numbers in parentheses represent the amount of data used by KIO, and the score for KIO in each task is the average score over 100 episodes.}
\label{results3}
\centering
\begin{tabular}{ccccccc}
    \toprule
    \textbf{Task} & \textbf{KIO} & \textbf{BC(TD3+BC)} & \textbf{BC(CQL)} & \textbf{Teacher}\\
    \midrule 
    Hopper-medium-expert & 79.6\,(100k) & 89.6 & \textbf{111.9} & 64.8\\
    Walker2d-medium-expert & \textbf{100.1}\,(100k)  & 12.0 & 11.3 & 82.7\\
    Halfcheetah-medium-expert & 46.4\,(100k)  & \textbf{67.6} & 35.8 & 64.4\\     
    \bottomrule 
\end{tabular}
\end{table}

\subsection{Ablation Studies}
\label{sec:numerical-ablation}
We perform ablation studies to understand the contribution of each individual component: Heuristic Coordinates Selection (Section \ref{sec:heuristic}) and Warm-Up Trick (Section \ref{sec:warm-up}).
Our results, presented in Figure \ref{ablation}, compare the performance of SSO with and without each component (all model hyperparameters remain unchanged as shown in Appendix \ref{B-1}).

We use the Hopper task as the testing task, with the first 5000 data points from the D4RL Hopper-expert dataset as training data.
The block coordinate for each iteration consists of 2500 coordinates ($|S|=2500$).
When applying the Warm-Up Trick, we partition the data into two equal parts and solve two subproblems \eqref{eq:dual_simpler}, each with $|S|=2500$.
Therefore, the computational time required for the Warm-Up Trick is approximately equal to the time 
\begin{wrapfigure}{r}{0.5\textwidth}
    \centering
    \vspace{-0.3cm}
    \includegraphics[width=0.45\textwidth]{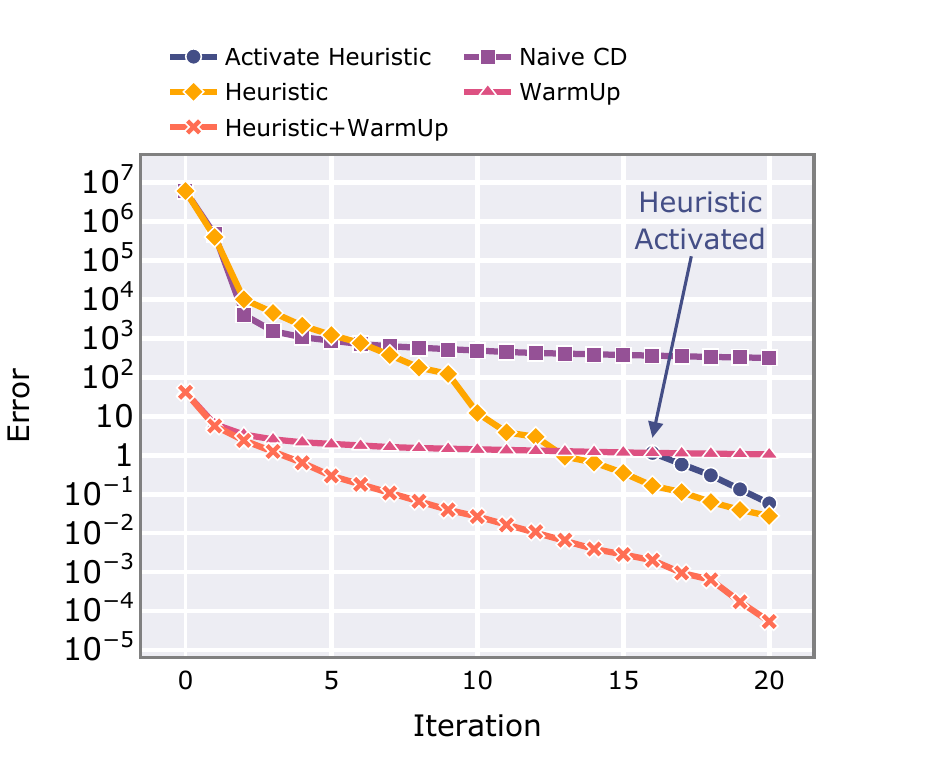} 
    \setlength{\abovecaptionskip}{-0.1cm}
    \caption{Convergence curves on the MuJoCo Hopper task with the first 5k data points from the D4RL Hopper-expert dataset.
    The vertical axis represents the difference between the current objective function value and the optimal value.
    Sequential Selection Optimization (orange) exhibits the fastest convergence rate.
    \label{ablation}
    }
    \label{fig:example}
    \vspace{-0.6cm}
\end{wrapfigure}
needed for two iterations of the SSO algorithm. In Figure \ref{ablation}, we present the results of 20 iterations, with the vertical axis representing the difference between the current objective function value and the optimal value in Problem \eqref{eq:dual_simpler}.

Both the Heuristic Coordinates Selection, abbreviated as Heuristic in Figure \ref{ablation}, and Warm-Up Trick significantly accelerate the algorithm.
With the Warm-Up Trick, the initial objective function value is markedly reduced.
Without the Warm-Up Trick, the Heuristic curve requires approximately 10 iterations to reach the initial values of the SSO curve, whereas the Warm-Up Trick requires only about the time of two iterations.
The Heuristic Coordinates Selection results in rapid descent of the error curves.
The WarmUp curve, however, becomes nearly flat after a few iterations, until the 17th iteration when the Heuristic Coordinates Selection method is activated, causing a rapid decrease in the curve.

\section{Conclusion and Limitations} \label{conclusion}
We introduced Kernel Inverse Optimization (KIO), an inverse optimization model leveraging kernel methods, along with its simplified variant and the theoretical derivations.
Subsequently, we proposed the Sequential Selection Optimization (SSO) algorithm for training the KIO model, which addresses memory issues by decomposing the original problem into a series of subproblems.
Our empirical results demonstrate that KIO exhibits strong learning capabilities in complex control tasks, while the SSO algorithm achieves rapid convergence to the optimal solution within a limited number of iterations.

One of the limitations of this model is the computational cost of adding a new data point.
In that case, all training data are required to compute the coefficients for the FOP problem (see FOP \eqref{eq:FOP_kernel}) for that point.
Thus, as the amount of training data grows, so does the computational cost.
Another limitation is the absence of theoretical analysis on the convergence rate of the SSO algorithm, which we leave for future research.
Furthermore, in our numerical experiments in Section \ref{Numerical Experiments}, the proposed KIO model, even with the SSO algorithm, required substantial memory resources—up to 256 GB when using 100k data points.
However, we believe that further optimization in implementation could reduce these memory requirements.
Finally, in our numerical experiments in Section \ref{Numerical Experiments}, we observed that initialization strategies critically impact the performance of the SSO algorithm.
Thus, exploring alternative initialization strategies beyond the one proposed in Section \ref{sec:warm-up} presents a promising direction for future work.

\appendix


\section*{SUPPLEMENTARY}

\section{Technical Proofs}
\label{app:proofs}

\subsection{Proof of Theorem \ref{theo:kernel_reformulation} \label{proof 1}}
First, let $P$, $\Tilde{\lambda}_i$ and $\begin{bmatrix}
\Lambda_{i}&\Gamma_{i}\\
*&\alpha_{i}
\end{bmatrix}$ be the Lagrange multiplier associated with the constraints $\theta_{u u} \succeq I_m$, $ \lambda_i \in \mathbb{R}_+^{d}$ and $\left[\begin{array}{cc}
\theta_{u u} & \hat{M}_i^{\top} \lambda_i+2 \theta_{s u}^{\top} \phi(\hat{s}_i) \\
* & \gamma_i
\end{array}\right] \succeq 0$ respectively, where $P,\Lambda_i \in \mathbb{R}^{n \times n}$, $\Gamma_i \in \mathbb{R}^{n}$, $\Tilde{\lambda}_i \in \mathbb{R}^{d}$ and $\gamma_i \in \mathbb{R}$. Then define the Lagrangian function
\begin{equation}
 \label{eq:lagrangian}
\begin{aligned}   
L(\theta_{uu},\theta_{su},\lambda_i,\gamma_i, P, \Tilde{\lambda}_i,\Lambda_{i},\Gamma_{i},\alpha_{i}) = & k\|\theta_{uu}\|_F^2 + k\|\theta_{su}\|_F^2 -\text{Tr}(P(\theta_{uu}-I_n)) - \sum_{i=1}^N\Tilde{\lambda_i}^\top \lambda_i \\
& + \frac{1}{N}\sum_{i=1}^N \left(\hat{u}_i^\top \theta_{uu}\hat{u}_i + 2\phi(\hat{s}_i)^\top \theta_{su}\hat{u}_i + \frac{1}{4} \gamma_i + \hat{W}_i^{\top} \lambda_i \right) \\
&  - \sum_{i=1}^N \text{Tr}
\left(\begin{bmatrix}
\Lambda_{i} & \Gamma_{i} \\
* & \alpha_{i}
\end{bmatrix}
\begin{bmatrix}
\theta_{uu} & \hat{M}_i^{\top} \lambda_i+2 \theta_{s u}^{\top} \phi(\hat{s}_i) \\
* & \gamma_i
\end{bmatrix} \right),
\end{aligned}
\end{equation}
and the Lagrange dual problem
\begin{equation}
 \label{eq:lagrandian_dual}
\begin{aligned} 
\begin{array}{cl}
\max\limits_{P,\Tilde{\lambda}_i,\Lambda_i,\Gamma_i,\alpha_i} & \inf\limits_{\theta_{uu},\theta_{su},\lambda_i,\gamma_i} L(\theta_{uu},\theta_{su},\lambda_i,\gamma_i, P, \Tilde{\lambda}_i,\Lambda_{i},\Gamma_{i},\alpha_{i}) \\
\text { s.t. } & P \succeq 0,\  \Tilde{\lambda}_i \in R_+^{d}, \quad \forall i \leq N  \\
& \begin{bmatrix}
    \Lambda_{i}&\Gamma_{i}\\
    *&\alpha_{i}
    \end{bmatrix} \succeq 0, \quad \forall i \leq N.
\end{array}
\end{aligned}
\end{equation}
When the Lagrangian function \eqref{eq:lagrangian} is at the point of the infimum with respect to $\theta_{uu},\theta_{su},\lambda_i,\gamma_i$, we have
\begin{equation*}   
\frac{\partial L}{\partial \theta_{uu}}=2k\theta_{uu}+\frac{1}{N}\sum_{i=1}^N\hat{u}_i\hat{u}_i^\top -P+\sum_{i=1}^N-\Lambda_i=0 \Rightarrow \theta_{uu} = \frac{1}{2k}\left(P - \left(\sum_{i=1}^N \frac{\hat{u}_i\hat{u}_i^\top }{N} - \Lambda_i \right)\right)
\end{equation*}
\begin{equation*}  
\frac{\partial L}{\partial \theta_{su}} = 2k\theta_{su}+2\sum_{i=1}^N\phi(\hat{s}_i) \left(\frac{\hat{u}_i^\top }{N}-2\Gamma_i^\top \right) = 0 \Rightarrow \theta_{su} = \sum_{i=1}^N\phi(\hat{s}_i)\frac{1}{k}\left( 2\Gamma_i^\top - \frac{\hat{u}_i^\top }{N} \right)
\end{equation*}
\begin{equation*}   
\frac{\partial L}{\partial \lambda_i}=\frac{\hat{W}_i}{N}-\Tilde{\lambda}_i-2\hat{M}_i\Gamma_i=0 \Rightarrow \Tilde{\lambda}_i=\frac{\hat{W}_i}{N}-2\hat{M}_i\Gamma_i
\end{equation*}
\begin{equation*} 
\frac{\partial L}{\partial \gamma_i}=\frac{1}{4N}-\alpha_i=0 \Rightarrow \alpha_i=\frac{1}{4N}
\end{equation*}
Finally, substituting the expressions for $\theta_{uu}$, $\theta_{su}$, $\Tilde{\lambda}_i$, and $\alpha_i$ into the Lagrange dual problem \eqref{eq:lagrandian_dual} and simplifying completes the proof.

\subsection{Proof of Proposition \ref{theo:heuristic selection}}
\label{proof 2}
Note that Problem \eqref{eq:dual_simpler} is the dual problem of Problem \eqref{eq:LMI} with $\theta_{uu}=I_n$. Here, we enumerate the five KKT conditions that will be employed
\begin{subequations}
\label{eq:KKT}
\begin{flalign}
    \theta_{su} = \frac{1}{k}\sum_{i=1}^N\phi(\hat{s}_i)\left( 2\Gamma_i^\top - \frac{\hat{u}_i^\top}{N} \right),
     && \text{(stationarity)} \label{eq:KKTa} \\
    \Tilde{\lambda}_i=\frac{\hat{W}_i}{N} - 2\hat{M}_i\Gamma_i,\ \forall i \leq N,
     && \text{(stationarity)} \label{eq:KKTb} \\
    \Tilde{\lambda_i}^\top \lambda_i=0,\ \forall i \leq N, && \text{(complementary slackness)} \label{eq:KKTc} \\
    \text{Tr}\left(
    \begin{bmatrix}
    \Lambda_{i} & \Gamma_{i} \\
    * & \frac{1}{4N}
    \end{bmatrix}
    \begin{bmatrix}
    I_n & \hat{M}_i^{\top} \lambda_i+2 \theta_{s u}^{\top} \phi(\hat{s}_i) \\
    * & \gamma_i
    \end{bmatrix} \right) = 0,\ \forall i \leq N,
    && \text{(complementary slackness)} \label{eq:KKTd} \\
    \lambda_i \in R_+^{d},\ \forall i \leq N. && \text{(primal feasibility)} \label{eq:KKTe}
\end{flalign}

\end{subequations}
First, we choose coordinate $i$ that satisfy Condition \eqref{eq:condition1}. Then based on KKT condition \eqref{eq:KKTb}, we have 
\begin{equation}
\begin{aligned}   
\Tilde{\lambda}_i>0.
\end{aligned}
\end{equation}
Next, based on conditions in \eqref{eq:KKTc} and \eqref{eq:KKTe}, one can obtain
\begin{equation}
\begin{aligned}   
\lambda_i=0.
\end{aligned}\label{eq:lamda=0}
\end{equation}
Substituting the result \eqref{eq:lamda=0} into Condition \eqref{eq:KKTd} yields
\begin{equation}
\begin{aligned}   
\text{Tr} \left(\begin{bmatrix}
    \Lambda_{i}&\Gamma_{i}\\
    *&\frac{1}{4N}
    \end{bmatrix}\begin{bmatrix}
    I_n & 2 \theta_{s u}^{\top} \phi(\hat{s}_i) \\
* & \gamma_i
    \end{bmatrix} \right)=0.
\end{aligned}\label{first TR}
\end{equation}
$\gamma_i$ is the decision variable of Problem \eqref{eq:LMI} with $\theta_{uu}=I_n$. Next, let's solve for its expression. By utilizing the Schur complement, we can prove the following two constraints are equivalent
\begin{equation*}
\begin{bmatrix}
I_n & \hat{M}_i^{\top} \lambda_i+2 \theta_{s u}^{\top} \phi(\hat{s}_i) \\
* & \gamma_i
\end{bmatrix} \succeq 0 \Leftrightarrow \gamma_i \geq \left \|\hat{M}_i^{\top} \lambda_i+2 \theta_{s u}^{\top} \phi(\hat{s}_i) \right\|_2^2.
\end{equation*}
Therefore, Problem \eqref{eq:LMI} with $\theta_{uu}=I_n$ can be equivalently expressed as
\begin{equation}
\begin{aligned} 
\begin{array}{cl}
\min\limits_{\theta_{su},\textcolor{red}{\gamma_i},\lambda_i} &k\|\theta_{su}\|_F^2+ \frac{1}{N}\sum_{i=1}^N\left(2\hat{u}_i^T\theta^T_{su}\phi(\hat{s}_i)+\frac{1}{4} \textcolor{red}{\gamma_i}+\hat{W}_i^\top \lambda_i\right) \\
\text { s.t. } & \lambda_i \in \mathbb{R}_+^{d},\ \textcolor{red}{\gamma_i} \in \mathbb{R}, \quad \forall i \leq N  \\
& \textcolor{red}{\gamma_i} \geq \left \|\hat{M}_i^{\top} \lambda_i+2 \theta_{s u}^{\top} \phi(\hat{s}_i) \right\|_2^2, \quad \forall i \leq N,
\end{array}
\end{aligned}\label{non LMI}
\end{equation}
Here, the variable $\gamma_i$ is highlighted. It can be easily proven that when $\gamma_i$ attain its optimal values, the equality in the last constraint should hold: $\gamma_i = \left \|\hat{M}_i^{\top} \lambda_i+2 \theta_{s u}^{\top} \phi(\hat{s}_i) \right\|_2^2$. Note that $\lambda_i=0$ \eqref{eq:lamda=0}, then the expression of $\gamma_i$ is
\begin{equation}
\begin{aligned}   
\gamma_i = \left \|2 \theta_{s u}^{\top} \phi(\hat{s}_i) \right\|_2^2.
\end{aligned}\label{gamma value}
\end{equation}
Substituting \eqref{gamma value} into \eqref{first TR}, one can obtain Condition \eqref{eq:condition2}.

\section{Hyperparameters of KIO\label{B-1}}

Table \ref{table:appx-hyper} provides the hyperparameters used in the experiments in Section \ref{Numerical Experiments}.

\begin{table}[!h]
\caption{KIO Environment Specific Parameters. $N$ is the size of the dataset, \texttt{p} is the number of coordinates that are updated in one iteration of SSO, $k$ is the regularization coefficient, and \texttt{scalar} is the multiplier used in the implementation to avoid numerical instabilities. }
     \centering
     \begin{tabular}{ccccc}
     \midrule 
     Environment & Dataset & $k$ & \texttt{scalar} & \texttt{p}\\
     \midrule 
     Hopper-expert & 0:5k & 1e-6 & 200$N$ & 0.5$N$\\
     Hopper-medium & 0:5k & 1e-6 & 200$N$ & 0.5$N$\\
     Hopper-medium-expert & First 50k + Last 50k & 1e-6 & $200N$ & 0.1$N$\\
     walker2d-expert & 0:5k+10k:15k & 1e-5 & 50$N$ & 0.5$N$\\     
     walker2d-medium & 15k:20k & 1e-5 & 50$N$ & 0.5$N$\\
     walker2d-medium-expert & First 50k + Last 50k & 1e-5 & $50N$ & 0.1$N$\\
     Halfcheetah-expert & 325k:330k+345k:350k & 5e-6 & 50$N$ & 0.5$N$\\
     Halfcheetah-medium & 10k:15k & 5e-6 & 50$N$ & 0.5$N$\\
     Halfcheetah-medium-expert & First 50k + Last 50k & 1e-6 & $50N$ & 0.1$N$\\
     \bottomrule 
     \end{tabular}
\label{table:appx-hyper}
\end{table}

\section{Ablation Study on Kernel Function\label{C-1}}

\begin{table}[H]
 \caption{Performance of KIO on MuJoCo tasks from the D4RL benchmark on the normalized return metric. The scores in each task represent the average score over 100 episodes within the range of one standard deviation.}
\label{table:appx-ablation}
 \centering
 \begin{tabular}{cccccc}
    \toprule
    \textbf{Task}      & \textbf{RBF}           & \textbf{Laplace}   & \textbf{Linear} \\
    \midrule 
    Hopper-medium      & 51 $\pm$ 6.4       & 41 $\pm$ 5.1         & 24 $\pm$ 0.05    \\
    Hopper-expert      & 109.9 $\pm$ 0.4    & 71 $\pm$ 26.7        & 28 $\pm$ 0.5     \\
    Walker2d-medium    & 72 $\pm$ 14        & 43 $\pm$ 28.2        & -0.19 $\pm$ 0.006    \\     
    Walker2d-expert    & 109.1 $\pm$ 0.3    & 103.1 $\pm$ 22.6     & -0.02 $\pm$ 0.1   \\
    Halfcheetah-medium & 32.4 $\pm$ 12.2    & 52 $\pm$ 10.5	       &-0.8 $\pm$ 0.6     \\
    Halfcheetah-expert & 78.8 $\pm$ 24.9    & 59.1 $\pm$ 35.5      & 2.0 $\pm$ 3.1     \\
    \bottomrule 
 \end{tabular}
\end{table}

In addition to the studies presented in Section \ref{sec:numerical-ablation}, we conducted experiments to evaluate our proposed model using different kernels commonly employed in machine learning: Gaussian (RBF), Laplacian, and linear kernels.
The results for the RBF kernel in Table \ref{table:appx-ablation} differ slightly from those in Table \ref{results3} due to using a different dataset partition for normalizing the states.

\ack{This work is partially supported by the European Research Council (ERC) under the European Unions Horizon 2020 research and innovation programme (TRUST-949796).
}

\bibliographystyle{plain}
\bibliography{MyBib}

\end{document}